\documentclass[conference]{IEEEconf}
\IEEEoverridecommandlockouts

\usepackage{cite}

\usepackage{amsmath,amssymb,amsfonts}
\usepackage{amsthm}
\usepackage{algorithmic}
\usepackage{graphicx}
\usepackage{textcomp}
\usepackage{xcolor}
\usepackage{algorithm}
\usepackage{subcaption}
\usepackage{url}
\usepackage{comment}
\newtheorem{lemma}{Lemma}
\newtheorem{problem}{Problem}

\def\BibTeX{{\rm B\kern-.05em{\sc i\kern-.025em b}\kern-.08em
    T\kern-.1667em\lower.7ex\hbox{E}\kern-.125emX}}
\begin{document}

\title{Reachability-Aware Time Scaling for 
Path Tracking
\thanks{This work is supported by ARO STIR Grant W911NF-25-1-0258}
\thanks{H. Gholampour and L. E. Beaver are with the Department of Mechanical and Aerospace Engineering and the Institute of Autonomous and Connected Systems,
Old Dominion University, Norfolk, VA 23529, USA (e-mails: \{mghol004,lbeaver\}@odu.edu).}
}

\author{Hossein Gholampour, \IEEEmembership{Graduate Student Member, IEEE},
Logan E. Beaver
\IEEEmembership{Member, IEEE}
}

\maketitle

\begin{abstract}
This paper studies tracking of collision-free waypoint paths produced by an offline planner for a planar double-integrator system with bounded speed and acceleration. Because sampling-based planners must route around obstacles, the resulting waypoint paths can contain sharp turns and high-curvature regions, so one-step reachability under acceleration limits becomes critical even when the path geometry is collision-free. We build on a pure-pursuit-style, reachability-guided quadratic-program (QP) tracker with a one-step acceleration margin. Offline, we evaluate this margin along a spline fitted to the waypoint path and update a scalar speed-scaling profile so that the required one-step acceleration remains below the available bound. Online, the same look-ahead tracking structure is used to track the scaled reference.
\end{abstract}


\section{Introduction}

Sampling-based motion planning is a common approach for generating collision-free paths in cluttered workspaces. Rapidly-exploring Random Trees (RRT) and their extensions build trees in the configuration space and can 
be adapted to high-dimensional systems \cite{lavalle2006planning,lavalle2001rrt}. RRT* adds an optimality mechanism that rewires nearby nodes and can converge to short paths \cite{karaman2011sampling}, and several variants improve its efficiency or bias the sampling, such as Informed RRT* \cite{gammell2014informed}, RRT*-Connect \cite{klemm2015rrt}, and RRTX \cite{otte2016rrtx}. In many implementations, these planners are used geometrically: they return a collision-free waypoint path, and a separate tracking controller is then responsible for tracking that path under speed and acceleration limits \cite{pham2017admissible,verginis2022kdf}. This separation is convenient, but the planner output may not be dynamically trackable under the true actuator bounds, especially near sharp turns \cite{orthey2023sampling,turkkol2024smooth}.

Given a collision-free geometric path, a common approach is to compute a dynamically feasible timing along the fixed path. In time-optimal path parameterization, reachability analysis can be used to construct feasible timing profiles under actuator limits \cite{pham2018new}. These methods focus on timing feasibility (often near time-optimal), but they typically do not address how a lightweight look-ahead tracker behaves on planner-generated paths under large execution disturbances, or how to rejoin the same path after freeze--resume events without replanning.

On the tracking side, a separate controller usually follows the planned path while enforcing dynamic limits such as speed and acceleration. Many systems use relatively simple position or velocity feedback laws, pure-pursuit-style tracking, or model predictive control 
and QP-based schemes to handle constraints explicitly \cite{jiang2025enhanced,nascimento2018nonholonomic,li2021model}. 
Pure pursuit remains popular in practice due to its simple geometry and implementation cost \cite{sulaiman2022implementation}. For constraint-enforced path following and recovery under disturbances, related work uses control barrier function (CBF) or reachability-based certificates to maintain safety/constraint satisfaction during tracking or return, e.g., robust CBF path following \cite{zheng2023constrained} and safe-return FaSTrack with robust control Lyapunov-value functions \cite{gong2024safe}. In our previous work \cite{gholampour2025trajectory}, we proposed a reachability-guided QP controller based on a one-step acceleration margin, and we showed that this margin can be used to verify whether a given 
trajectory is feasible within prescribed error bounds. 
However, that approach assumed a smooth, preplanned path with fixed timing and did not study paths produced by geometric planners (e.g., RRT, PRM) with many sharp corners and local curvature changes.

In this paper, we focus on the gap between planning and tracking. We start from a geometric waypoint path produced by a planner, such as RRT*, and we do not modify the planner itself. We fit a $C^2$ spline to the waypoints using a nominal speed and evaluate a one-step acceleration margin along the resulting path at sampled look-ahead targets.
When the margin indicates one-step infeasibility relative to the available acceleration bound, we apply time scaling to locally reduce the reference speed. 
The timing update is local: speed is reduced only where the bound is violated, while the rest of the path remains close to nominal speed. During execution, we use the same look-ahead tracking structure to follow the time-scaled reference.
We reuse the margin as a feasibility check for the look-ahead target, and rejoin the same path after interruptions or disturbances without replanning. Overall, this provides a reachability-aware offline time-scaling method that converts a geometric waypoint path into a timed reference that is consistent with one-step acceleration limits, with the same reachability-guided tracker to track the verified reference. 

The remainder of the paper is organized as follows: Section \ref{sec:problem} states the model and timing problem. Section \ref{sec:method} presents the controller and offline time-scaling design. Section \ref{sec:setup_results} presents simulation results. Section \ref{sec:conclusion} concludes and discusses limitations and future work.

\section{Problem Formulation} \label{sec:problem}

We consider a robotic system that tracks a planar trajectory in the presence of static obstacles.
The 
position and velocity are
$\mathbf{p}(t), \mathbf{v}(t) \in \mathbb{R}^2$, and the commanded acceleration is
$\mathbf{u}(t) \in \mathbb{R}^2$.
We model the tracked output as a double integrator with bounded disturbances:
\begin{equation}
    \dot{\mathbf{p}}(t) = \mathbf{v}(t) + \mathbf{n}_p(t), 
    \quad 
    \dot{\mathbf{v}}(t) = \mathbf{u}(t) + \mathbf{n}_v(t),
    \label{eq:continuous_model}
\end{equation}
where $\mathbf{n}_p(t)$ and $\mathbf{n}_v(t)$ are perturbations that satisfy
\begin{equation}
    \|\mathbf{n}_p(t)\| \leq \varepsilon_p,
    \quad
    \|\mathbf{n}_v(t)\| \leq \varepsilon_v.
    \label{eq:noise_bounds}
\end{equation}
This model applies to manipulators in task space, mobile robots, and other systems whose
tracked output can be feedback-linearized to a double integrator \cite{gholampour2025trajectory}.

Hardware and safety limits impose magnitude bounds on speed and acceleration,
\begin{equation}
    \|\mathbf{v}_k\| \leq v_{\max},
    \qquad
    \|\mathbf{u}_k\| \leq a_{\max},
    \label{eq:constraints}
\end{equation}
We implement the tracking controller on a digital controller with sampling period $t_s$, with sample times $t_k = k t_s$, for $k = 1, 2, \dots$ time steps. A forward-Euler discretization gives
\begin{align}
    \mathbf{v}_{k+1} &= \mathbf{v}_k + (\mathbf{u}_k + \mathbf{n}_{v,k}) t_s, 
    \label{eq:discrete_v} \\
    \mathbf{p}_{k+1} &= \mathbf{p}_k + (\mathbf{v}_k + \mathbf{n}_{p,k}) t_s 
    + \frac{1}{2} (\mathbf{u}_k + \mathbf{n}_{v,k}) t_s^2.
    \label{eq:discrete_p}
\end{align}

These sampled equations define the one-step model used in the reachability margin in Section~\ref{subsec:margin}.
Let the position workspace be a compact set $\mathcal{X}\subset\mathbb{R}^2$.
It contains a finite set of static obstacles
\begin{equation}
    \mathcal{X}_{\text{obs}}^{(i)} \subset \mathcal{X}, 
    \quad i = 1,\dots,N_{\text{obs}}.
\end{equation}
The total obstacle region is
\begin{equation}
    \mathcal{X}_{\text{obs}} = \bigcup_{i=1}^{N_{\text{obs}}} \mathcal{X}_{\text{obs}}^{(i)},
\end{equation}
and the free space is
\begin{equation}
    \mathcal{X}_{\text{free}} = \mathcal{X} \setminus \mathcal{X}_{\text{obs}}.
\end{equation}
We do not assume any specific shape for $\mathcal{X}_{\text{obs}}^{(i)}$; we only require that
collision checking can decide membership in $\mathcal{X}_{\text{obs}}$.

Given start and goal positions
$\mathbf{p}_{\text{start}}, \mathbf{p}_{\text{goal}} \in \mathcal{X}_{\text{free}}$,
we assume that an initial feasible path is available in the form of a collision-free waypoint sequence
\begin{equation} \label{eq:waypoint_list}
    \mathcal{W} = \{\mathbf{w}_0, \mathbf{w}_1, \ldots, \mathbf{w}_{N_{\text{wp}}}\},
    \quad 
    \mathbf{w}_i \in \mathcal{X}_{\text{free}},
\end{equation}
with $\mathbf{w}_0 = \mathbf{p}_{\text{start}}$ and
$\mathbf{w}_{N_{\text{wp}}} = \mathbf{p}_{\text{goal}}$.
These waypoints define a collision-free polyline but may contain sharp corners, and they do not include timing information.

To obtain a smooth reference, we fit a $C^2$ spline through the waypoints.
We assign spline knot parameters $\tau_i$ from the cumulative Euclidean distance along the waypoint polyline and then scale them by a constant so that $\tau$ is a nominal time-like parameter over $[0,\tau_{\text{end}}]$. With this choice, $\mathbf{p}'_{\text{ref}}(\tau)$ represents a nominal velocity field along the spline. This yields a reference position trajectory,
\begin{equation}
    \mathbf{p}_{\text{ref}}(\tau) : [0, \tau_{\text{end}}] \to \mathcal{X}_{\text{free}},
    \label{eq:pref_def}
\end{equation}
with derivative $\mathbf{p}'_{\text{ref}}(\tau)$. We treat $\tau$ as a scalar path parameter that measures progress along the reference spline, and physical time enters through a timing law $\tau(t)$ that maps $t \ge 0$ into $[0,\tau_{\text{end}}]$. We consider a baseline timing $\dot{\tau}(t)=1$ (so $\tau(t)=t$ during motion), and apply timing modifications through a time-scaling function $\alpha(t)=\dot{\tau}(t)$ relative to this baseline.

The state of the system at sample $k$ is
$(\mathbf{p}_k, \mathbf{v}_k)$, and the reference is the spline
$\mathbf{p}_{\text{ref}}(\tau)$ in \eqref{eq:pref_def}.
The tracking goal is to move from $\mathbf{p}_{\text{start}}$ to $\mathbf{p}_{\text{goal}}$ along the reference spline, 
while respecting the limits in \eqref{eq:constraints} for all $k$.

We also consider freeze--resume events, motivated by collaborative robotics.
A freeze event is triggered when a safety condition is met (for example, a force threshold, proximity sensor, or emergency stop). During the freeze interval $[t_f,t_r]$ the end-effector is required to remain at the frozen position with zero velocity,
\begin{equation}
    \mathbf{p}(t) = \mathbf{p}(t_f), \quad
    \mathbf{v}(t) = \mathbf{0}, \quad
    \mathbf{u}(t) = \mathbf{0}, \quad
    t \in [t_f, t_r].
\end{equation}
After $t = t_r$ the controller must rejoin the same reference path $\mathbf{p}_{\text{ref}}(\tau)$ without replanning, while satisfying the sampled bounds in \eqref{eq:constraints}.

We make the following assumptions:

\begin{enumerate}
    \item \textbf{Real-time feasibility:}
    Computation and communication delays are small relative to $t_s$, so the selected control $\mathbf{u}_k$ is applied within the sample. 
    \item \textbf{Planner output (polyline):}
    A geometric planner provides a waypoint sequence from \eqref{eq:waypoint_list} such that the straight-line segments between consecutive waypoints lie in $\mathcal{X}_{\text{free}}$.
    \item \textbf{Spline fit (final path):}
    The fitted $C^2$ spline $\mathbf{p}_{\text{ref}}(\tau)$ constructed from $\mathcal{W}$ is regular and satisfies $\mathbf{p}_{\text{ref}}(\tau) \in \mathcal{X}_{\text{free}}$ for all $\tau \in [0,\tau_{\text{end}}]$.

\end{enumerate}

The real-time feasibility assumption is typically satisfied for an onboard digital controller when computation and communication delays are small relative to $t_s$. The spline collision-free assumption is more restrictive, since smoothing a collision-free polyline can introduce collisions near obstacles. In practice, this can be addressed by inflating obstacles during planning, by checking the fitted spline for collisions and refining waypoints locally, or by using a smoothing method with collision constraints. 
Generating and certifying smooth collision-free paths is beyond the scope of this paper.
Under these assumptions, we address the following problem.

\begin{problem}
Given the disturbance bounds \eqref{eq:noise_bounds}, the sampled dynamics \eqref{eq:discrete_v}--\eqref{eq:discrete_p}, the sampled speed and acceleration bounds \eqref{eq:constraints}, and a collision-free spline reference $\mathbf{p}_{\text{ref}}(\tau)$ in \eqref{eq:pref_def}, design (i) an offline time-scaling law along the path, and (ii) an online tracking controller such that:
\begin{itemize}
    \item the motion satisfies the sampled speed and acceleration bounds at each control update,
    \item the system can rejoin the same path after a freeze interval without replanning,
    \item and the tracking objective is maintained on planner-generated paths with sharp turns.
\end{itemize}
\end{problem}

\section{Controller and Time-Scaling Design} \label{sec:method}
We build on our reachability-guided QP tracking controller from \cite{gholampour2025trajectory}. In that work, we developed a one-step reachability margin 
to detect when the look-ahead target was too aggressive and to support freeze--resume recovery under bounded disturbances. Here, we use the same feasibility check in two places: offline, we evaluate the margin along the spline and construct a local time-scaling profile. 
Online, 
we use the controller to track the reference spline while enforcing the sampled speed and acceleration bounds.
Under Assumption 3, obstacle avoidance is handled at the planning stage through the collision-free reference.
While we do not include an explicit obstacle-avoidance layer here, such constraints could be incorporated with CBF-based collision-avoidance methods \cite{zheng2023constrained}.

\subsection{Look-Ahead and One-Step Reachability Margin}
\label{subsec:margin}
Offline, we compute a scalar time-scaling profile $\alpha(\tau)\in(0,1]$ on the $\tau$ grid, where $\alpha(\tau_k) < 1$ effectively reduces the speed of the reference profile at time step $k$.
We achieve this by setting,
\begin{equation}
    \dot{\tau}_k = \alpha(\tau_k),
    \label{eq: tau_dot}
\end{equation}
which determines the path progress,
\begin{equation}
    \tau_{k+1} = \min\{\tau_{\text{end}},\; \tau_k + \dot{\tau}_k t_s\},
    \label{eq:tau_update}
\end{equation}
where $\tau_{\text{end}}$ clamps the reference at the arrival time.
%
%
We denote the speed profile of the trajectory by
\begin{equation}
v_{\text{path}}(\tau) := \alpha(\tau)\,\|\mathbf{p}'_{\text{ref}}(\tau)\|,
\label{eq:vpath_def}
\end{equation}
which follows from the chain rule.
This is how time-scaling profile $\alpha(\tau)$ affects the reference speed.

At each sample $k$ we project the current position $\mathbf{p}_k$ onto the reference path, 
\begin{equation}
    \tau_c = \arg\min_{\tau \in [0,\tau_{\text{end}}]}
    \big\|\mathbf{p}_{\text{ref}}(\tau) - \mathbf{p}_k\big\|.
    \label{eq:closest_point}
\end{equation}
We use this to determine the look-ahead speed,
\begin{equation}
v_{\text{LA},k} := \min\{v_{\max},\, \dot{\tau}_k\|\mathbf{p}'_{\text{ref}}(\tau_c)\|\},
\label{eq:vla_def}
\end{equation}
which we use to select the look-ahead distance,
\begin{equation}
s_{\text{LA},k} = v_{\text{LA},k}\, t_s .
\label{eq:sla_rule}
\end{equation}
This choice makes the look-ahead distance proportional to the local reference speed. 


Let $S(\tau)$ denote the arc-length function of the spline,
\begin{equation}
    S(\tau) = \int_{0}^{\tau} \left\|\mathbf{p}'_{\text{ref}}(\xi)\right\| d\xi,
    \label{eq:arc_length_def}
\end{equation}
where $\xi$ is an integration variable.
By Assumption 3, $S(\tau)$ is strictly increasing, so the inverse $S^{-1}(\cdot)$ is well-defined. We define the look-ahead parameter by advancing the arc length from the closest point:
\begin{equation}
    \tau_{\text{LA},k} =
    \min\!\left\{
    \tau_{\text{end}},\;
    S^{-1}\!\big(S(\tau_c)+s_{\text{LA},k}\big)
    \right\}.
    \label{eq:tau_la}
\end{equation}
Given $s_{\text{LA},k}$, this construction defines the look-ahead parameter geometrically through arc length. 
Figure \ref{fig:lookahead_geom} illustrates this sampled closest-point/look-ahead construction, where $\mathbf{p}_c=\mathbf{p}_{\text{ref}}(\tau_c)$ denotes the closest sampled reference point and $\mathbf{p}_{\text{LA},k}=\mathbf{p}_{\text{ref}}(\tau_{\text{LA},k})$ denotes the look-ahead target. The black dots indicate sampled reference points, and the highlighted arc represents the accumulated arc length used to advance from $\tau_c$ to $\tau_{\text{LA},k}$. 
%
The resulting look-ahead target is,
\begin{equation}
    \mathbf{p}_{\text{LA},k} = \mathbf{p}_{\text{ref}}(\tau_{\text{LA},k}).
    \label{eq:pla}
\end{equation}
 
\begin{figure}[t]
    \centering
    \includegraphics[width=1 \linewidth]{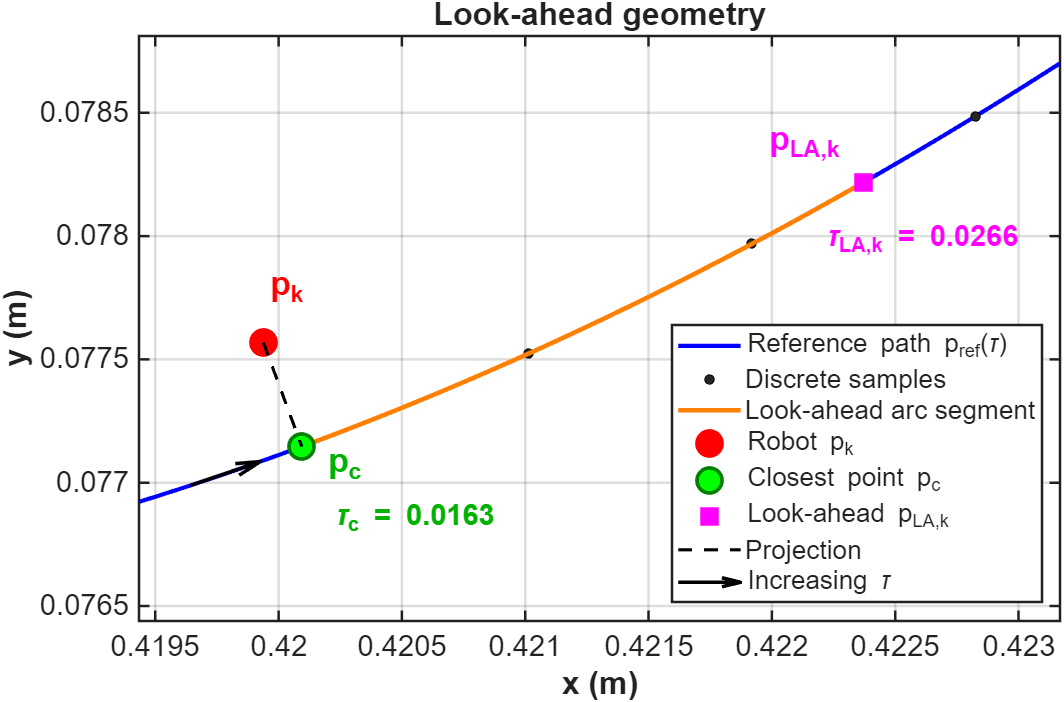}
    \caption{Look-ahead geometry on the reference path. The closest point $p_c=p_{\mathrm{ref}}(\tau_c)$ is obtained by projecting $p_k$ onto the path, and the look-ahead target $p_{\mathrm{LA},k}=p_{\mathrm{ref}}(\tau_{\mathrm{LA},k})$ is selected by advancing arc length $s_{\mathrm{LA},k}$ from $\tau_c$.}
    \label{fig:lookahead_geom}
\end{figure}


In the remainder of this sub-section, we briefly introduce our tracking controller \cite{gholampour2025trajectory}.
For a given state $(\mathbf{p}_k,\mathbf{v}_k)$ and target $\mathbf{p}_{\text{LA},k}$, we compute the nominal one-step acceleration that would place the system at $\mathbf{p}_{\text{LA},k}$ after one sample:
\begin{equation}
    \mathbf{u}_{\text{req},k}
    = \frac{2}{t_s^2} \left( \mathbf{p}_{\text{LA},k} - \mathbf{p}_k - t_s \mathbf{v}_k \right).
    \label{eq:u_req_vec}
\end{equation}
For notational convenience, we define the required control effort magnitude in terms of the vector $\mathbf{r}_k$,
\begin{equation}
    \mathbf{r}_k = \mathbf{p}_{\text{LA},k} - \mathbf{p}_k - t_s \mathbf{v}_k,
    \quad
    u_{\text{req},k} = \frac{2}{t_s^2} \big\|\mathbf{r}_k\big\|.
    \label{eq:u_req_scalar}
\end{equation}

To account for bounded disturbances, we convert the bounds $\varepsilon_p,\varepsilon_v$ into an equivalent acceleration offset. The worst-case one-step position shift due to disturbance is
\begin{equation}
    \Delta p_{\max} = \varepsilon_p t_s + \tfrac{1}{2} \varepsilon_v t_s^2.
\end{equation}
The corresponding disturbance offset is
\begin{equation}
    \sigma = \frac{2 \Delta p_{\max}}{t_s^2}
    = \frac{2 \varepsilon_p}{t_s} + \varepsilon_v.
    \label{eq:sigma}
\end{equation}
We then define the one-step reachability margin
\begin{equation}
    \delta_k = u_{\text{req},k} - (a_{\max} - \sigma),
    \label{eq:delta}
\end{equation}
where $a_{\max}-\sigma$ is the available acceleration after accounting for bounded disturbances.
If $\delta_k \le 0$, the look-ahead target is one-step feasible under the available acceleration bound $a_{\max}-\sigma$. If $\delta_k > 0$, the target is not guaranteed to be reachable in one step under the acceleration limit and robustness margin.

We use the same margin to select the control action during online tracking. Consider the one-step position error,
\begin{equation}
    \mathbf{e}_{p,k} =
    \mathbf{p}_{\mathrm{LA},k} - \left(\mathbf{p}_k + t_s\mathbf{v}_k\right).
\end{equation}
Under our proposed time scaling, the reference velocity at the look-ahead target is
\begin{equation} \label{eq:vref}
    \mathbf{v}_{\text{ref},k} = \dot{\tau}_k\,\mathbf{p}'_{\mathrm{ref}}(\tau_{\mathrm{LA},k}),
\end{equation}
so that
\begin{equation}
    \mathbf{e}_{v,k} = \mathbf{v}_{\text{ref},k} - \mathbf{v}_k.
\end{equation}
If $\delta_k \le 0$, we have sufficient control authority to bring the position error to zero, i.e.,
\begin{equation}
    \mathbf{u}_k^\star = \frac{2}{t_s^2}\mathbf{e}_{p,k}.
    \label{eq:u_star_k}
\end{equation}
If $\delta_k > 0$, we minimize a combination of position and velocity error to avoid overshooting the reference \cite{gholampour2025trajectory},
\begin{equation}
    J = \frac{1}{2}||\mathbf{e}_{p,k}||^2 + \frac{C_k}{2} ||\mathbf{e}_{v,k}||^2
\end{equation}
where $C_k>0$ is an adaptive weight that we update online.
The optimal control input is,
\begin{equation}
    \mathbf{u}_k^\star =
    \frac{\mathbf{e}_{p,k} + \dfrac{2C_k}{t_s}\mathbf{e}_{v,k}}
    {\dfrac{1}{2}t_s^2 + 2C_k}.
\end{equation}
We then clip $\mathbf{u}_k^\star$ to satisfy the sampled speed and acceleration bounds.

The next lemma identifies a local mechanism that can lead to transient overshoot and oscillatory corrections in the one-step position-matching law \eqref{eq:u_star_k}.
\begin{lemma}
\label{lem:chord_tangent}

Consider the disturbance-free sampled model \eqref{eq:discrete_v}--\eqref{eq:discrete_p}, and suppose the unclipped control law \eqref{eq:u_star_k} is applied. Then the next-step velocity is
\begin{equation}
    \mathbf{v}_{k+1}  = -\mathbf{v}_k + \frac{2}{t_s}
    \left(
    \mathbf{p}_{\mathrm{LA},k}  -  \mathbf{p}_k
    \right).
    \label{eq:vf_chord}
\end{equation}
Hence exact one-step position matching is compatible with the desired reference velocity $\mathbf{v}_{\mathrm{ref},k}$ if and only if
\begin{equation}
    \mathbf{m}_k := 2\left(\mathbf{p}_{\mathrm{LA},k}-\mathbf{p}_k\right)-t_s\left(\mathbf{v}_k+\mathbf{v}_{\mathrm{ref},k}\right)=\mathbf{0}
    \label{eq:chord_tangent_mismatch}
\end{equation}
\end{lemma}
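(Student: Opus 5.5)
The proof is a direct substitution into the disturbance-free sampled model, followed by a rearrangement of the velocity-compatibility condition. Setting $\mathbf{n}_{p,k}=\mathbf{n}_{v,k}=\mathbf{0}$ in \eqref{eq:discrete_v}--\eqref{eq:discrete_p} gives $\mathbf{v}_{k+1}=\mathbf{v}_k+\mathbf{u}_k t_s$ and $\mathbf{p}_{k+1}=\mathbf{p}_k+t_s\mathbf{v}_k+\tfrac12\mathbf{u}_k t_s^2$.

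\textbf{Position matching.} Substituting the unclipped law \eqref{eq:u_star_k}, $\mathbf{u}_k=\frac{2}{t_s^2}\mathbf{e}_{p,k}$, into the position update gives
\begin{equation*}
\mathbf{p}_{k+1}=\mathbf{p}_k+t_s\mathbf{v}_k+\mathbf{e}_{p,k}=\mathbf{p}_{\mathrm{LA},k}.
\end{equation*}
So position matching is exact.

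\textbf{Next-step velocity.} Substituting the same input into the velocity update gives
\begin{equation*}
\mathbf{v}_{k+1}=\mathbf{v}_k+\frac{2}{t_s}\mathbf{e}_{p,k}=\mathbf{v}_k+\frac{2}{t_s}\left(\mathbf{p}_{\mathrm{LA},k}-\mathbf{p}_k\right)-2\mathbf{v}_k.
\end{equation*}
Collecting the $\mathbf{v}_k$ terms yields \eqref{eq:vf_chord}.

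\textbf{Compatibility condition.} Here ``compatible with the desired reference velocity'' is read as $\mathbf{v}_{k+1}=\mathbf{v}_{\mathrm{ref},k}$, i.e., the step that lands exactly on the look-ahead target also arrives with the reference velocity there. Using \eqref{eq:vf_chord}, this equality is
\begin{equation*}
-\mathbf{v}_k+\frac{2}{t_s}\left(\mathbf{p}_{\mathrm{LA},k}-\mathbf{p}_k\right)=\mathbf{v}_{\mathrm{ref},k}.
\end{equation*}
Multiplying by $t_s>0$ and moving all terms to one side gives $\mathbf{m}_k=\mathbf{0}$. Each step is reversible, so the equivalence holds in both directions.

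The only point needing care is this interpretation of ``compatible''. One could also note that $\mathbf{m}_k=\mathbf{0}$ says the chord $\mathbf{p}_{\mathrm{LA},k}-\mathbf{p}_k$ equals $\tfrac{t_s}{2}(\mathbf{v}_k+\mathbf{v}_{\mathrm{ref},k})$, a trapezoidal average of the two velocities. This explains the chord--tangent mismatch at sharp turns, but it is commentary rather than part of the proof.
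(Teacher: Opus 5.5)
Your proposal is correct and follows the paper's proof: substitute the unclipped law into the disturbance-free velocity update to get \eqref{eq:vf_chord}, then impose $\mathbf{v}_{k+1}=\mathbf{v}_{\mathrm{ref},k}$ and rearrange to obtain $\mathbf{m}_k=\mathbf{0}$. Your explicit check that $\mathbf{p}_{k+1}=\mathbf{p}_{\mathrm{LA},k}$ and your remark that the equivalence is reversible are small clarifications the paper leaves implicit.
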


\begin{proof}
Substituting \eqref{eq:u_star_k} into the disturbance-free velocity update \eqref{eq:discrete_v} gives
\[
\mathbf{v}_{k+1}=\mathbf{v}_k + \mathbf{u}_k^\star t_s=\mathbf{v}_k + \frac{2}{t_s^2}\mathbf{e}_{p,k}\, t_s.
\]
Using
\[
\mathbf{e}_{p,k}=\mathbf{p}_{\mathrm{LA},k}-\left(\mathbf{p}_k+t_s\mathbf{v}_k\right),
\]
we obtain
\begin{equation*}
\begin{aligned}
\mathbf{v}_{k+1}
&=
\mathbf{v}_k
+
\frac{2}{t_s}
\left(
\mathbf{p}_{\mathrm{LA},k}
-
\mathbf{p}_k
-
t_s\mathbf{v}_k
\right) \\
&=
-\mathbf{v}_k
+
\frac{2}{t_s}
\left(
\mathbf{p}_{\mathrm{LA},k}
-
\mathbf{p}_k
\right).
\end{aligned}
\end{equation*}
which proves \eqref{eq:vf_chord}. Requiring
\(
\mathbf{v}_{k+1}=\mathbf{v}_{\mathrm{ref},k}
\)
and rearranging yields \eqref{eq:chord_tangent_mismatch}.
\end{proof}

Lemma \ref{lem:chord_tangent} shows that exact one-step position matching determines a terminal velocity through the chord from $\mathbf{p}_k$ to $\mathbf{p}_{\mathrm{LA},k}$, whereas the desired reference velocity is tangent-based through \eqref{eq:vref}. On a curved path, or when the robot is off the path, these two directions generally differ. The mismatch vector $\mathbf{m}_k$ therefore provides a local measure of chord--tangent inconsistency. Repeated one-step corrections against this mismatch can produce transient overshoot and oscillatory corrections, which provide a local explanation for the chattering observed in simulation.

\subsection{Offline Reachability-Aware Time Scaling}
\label{subsec:offline_scaling}

We use the margin $\delta_k$ to adapt the time-scaling profile along the path before execution. We keep the geometric path $\mathbf{p}_{\text{ref}}(\tau)$ fixed and reduce the desired speed only where the one-step look-ahead reachability check indicates excessive required acceleration.
Offline, we compute the scalar time-scaling profile $\alpha(\tau)\in(0,1]$ at discrete sample points $\tau_i$, as shown in Fig. \ref{fig:lookahead_geom}.
This affects the reference velocity \eqref{eq:vref} 
which determines the look-ahead distance via 
\eqref{eq:sla_rule}.
We define the available acceleration by
\begin{equation}
    a_{\text{avail}} = a_{\max} - \sigma.
    \label{eq:a_avail}
\end{equation}

At each sampled path location $\tau_i$, let $\alpha_i := \alpha(\tau_i)$ and evaluate the one-step required acceleration using the reference state
\begin{equation}
    \mathbf{p}_i = \mathbf{p}_{\text{ref}}(\tau_i),
    \qquad
    \mathbf{v}_i = \alpha_i \mathbf{p}'_{\text{ref}}(\tau_i).
\end{equation}
Let
\begin{equation}
    \Delta \tau_i := \tau_{\text{LA},i} - \tau_i .
\end{equation}
If $\alpha$ is locally constant over one control step, then the arc-length look-ahead construction yields the local approximation
\begin{equation}
    \Delta \tau_i \approx \alpha_i t_s .
\end{equation}
Let
\begin{equation}
    \mathbf{p}_{\text{LA},i} := \mathbf{p}_{\text{ref}}(\tau_{\text{LA},i})
    = \mathbf{p}_{\text{ref}}(\tau_i + \Delta \tau_i).
\end{equation}
Using a second-order Taylor expansion of $\mathbf{p}_{\text{ref}}$ about $\tau_i$,
\begin{equation}
    \mathbf{p}_{\text{LA},i} = \mathbf{p}_i + \mathbf{p}'_{\text{ref}}(\tau_i)\Delta \tau_i + \frac{1}{2}\mathbf{p}''_{\text{ref}}(\tau_i)\Delta \tau_i^2 + \cdots,
\end{equation}
and using the one-step mismatch in \eqref{eq:u_req_scalar}, evaluated at the offline sample $i$, the first-order term cancels, so the leading term is quadratic in $\alpha_i$:
\begin{equation}
    \mathbf{r}_i \approx \frac{1}{2}\mathbf{p}''_{\text{ref}}(\tau_i)\alpha_i^2 t_s^2 .
\end{equation}
Therefore,
\begin{equation}
    |u_{\text{req},i}|  =  \frac{2}{t_s^2}\|\mathbf{r}_i\|
    \approx  \alpha_i^2 \|\mathbf{p}''_{\text{ref}}(\tau_i)\|,
\end{equation}
so if the local scale changes from $\alpha_i$ to $\alpha_i^{\text{new}}$, we obtain the local second-order approximation
\begin{equation}
    |u_{\text{req},i}^{\text{new}}|
    \approx
    \left(\frac{\alpha_i^{\text{new}}}{\alpha_i}\right)^2
    |u_{\text{req},i}|.
    \label{eq:ur_quadratic_scaling}
\end{equation}
This approximation motivates the update
\begin{equation}
    \alpha_i^{\text{new}} = \alpha_i \sqrt{\frac{a_{\text{avail}}}{|u_{\text{req},i}|}},
    \label{eq:alpha_update}
\end{equation}
which we apply  whenever $|u_{\text{req},i}|>a_{\text{avail}}$.

To avoid isolated pointwise updates on the sampled path, we apply the same reduction over a local neighborhood of the recorded look-ahead location. Let
\begin{equation}
    s_i^{\mathrm{LA}} := S(\tau_{\mathrm{LA},i}),
\end{equation}
and define the neighborhood
\begin{equation}
    \mathcal{N}_i := \left\{ j : \left| S(\tau_j) - s_i^{\mathrm{LA}} \right| \leq w_{\mathrm{n}} \right\},
    \label{eq:neighborhood_def}
\end{equation}
where $w_{\mathrm{n}}$ is a fixed arc-length half-window. For each violating sample $i$, we update all grid points $j \in \mathcal{N}_i$ by
\begin{equation}
    \alpha_j \leftarrow \min\!\left( \alpha_j,\; \alpha_i^{\text{new}} \right).
    \label{eq:neighborhood_update}
\end{equation}
If multiple violating samples produce overlapping neighborhoods, we keep the smallest assigned value.

Algorithm \ref{alg:practical_ts} summarizes the offline time-scaling procedure used in the simulations. Lines 1--3 initialize the nominal profile and record the one-step feasibility data from the nominal run. Lines 4--6 apply the local update \eqref{eq:alpha_update} over the neighborhood \eqref{eq:neighborhood_def} at samples with $\delta_i>0$. Line 7 smooths the resulting discrete profile on the sampled path grid, and Lines 8--9 construct the final time-scaled reference. This construction preserves the nominal speed on path segments that already satisfy the one-step bound and reduces the speed only where the nominal run indicates infeasibility.

\begin{algorithm}[ht]
\caption{Offline time-scaling procedure}
\label{alg:practical_ts}
\begin{algorithmic}[1]
\REQUIRE Nominal reference path $\mathbf{p}_{\text{ref}}(\tau)$, 
limits $a_{\max}$, robustness margin $\sigma$
\STATE Run the nominal tracker with $\alpha(\tau)= 1$
\STATE Record $\delta_i$, $u_{\text{req},i}$, and the corresponding look-ahead location for each discretization point $i$
\STATE Initialize $\alpha_i \leftarrow 1$ at each grid point
\FOR{each recorded sample $i$ with $\delta_i > 0$}
    \STATE For all $j \in \mathcal{N}_i$, update 
    $\alpha_j \leftarrow \min\!\left(\alpha_j,\sqrt{\dfrac{a_{\text{avail}}}{u_{\text{req},i}}}\right)$
\ENDFOR
\STATE Smooth the discrete profile $\{\alpha_i\}$ on the sampled path grid using a centered moving-average window of length $N_{\text{smooth}}$
\STATE Construct the scaled reference velocity field $\mathbf{v}_{\text{ref}}(\tau_i)=\alpha_i\,\mathbf{p}'_{\text{ref}}(\tau_i)$
\RETURN Scaled reference field $(\mathbf{p}_{\text{ref}}(\tau_i),\mathbf{v}_{\text{ref}}(\tau_i))$
\end{algorithmic}
\end{algorithm}

The offline-scaled results reported in Section \ref{sec:setup_results} use Algorithm \ref{alg:practical_ts}. 

\section{Simulation Setup and Results} \label{sec:setup_results}

\subsection{Simulation Setup} \label{subsec:setup}
We simulate the planar double-integrator model in a square workspace $\mathcal{X}=[0,L_x]\times[0,L_y]$. The start and goal positions $\mathbf{p}_{\text{start}}$ and $\mathbf{p}_{\text{goal}}$ are placed in opposite corners so that any feasible path must pass through the obstacle region. The workspace contains $N_{\text{obs}}$ circular obstacles with randomly sampled radii $r_i\in[r_{\min}, r_{\max}]$ and centers drawn uniformly in $\mathcal{X}$, subject to a minimum clearance from the start and goal. 
Obstacles that violate the start/goal clearance are rejected.
The simulation parameters are given in Table \ref{tab:sim_params}.

We first generate an initial collision-free waypoint path with a geometric sampling-based planner (RRT*) and then backtrack from the goal node to the root to obtain a waypoint sequence. We remove duplicate waypoints and fit a $C^2$ cubic spline $\mathbf{p}_{\text{ref}}(\tau)$ through the remaining points. For the offline time-scaling and reachability checks, this spline and its derivatives are sampled on a fine grid in $\tau$. We intentionally choose an aggressive nominal timing to parameterize the spline. This produces a baseline reference that violates the state and/or control constraints, 
so that the effect of the offline time scaling can be evaluated. 
Fig. \ref{fig:setup_overview} shows one representative trial in the workspace, including the obstacles, the RRT* waypoint polyline, the fitted spline, and the executed trajectory. The red portions of the executed trajectory indicate path segments where the offline time-scaling profile is active, i.e., where $\alpha(\tau)<1$.

\begin{figure}[ht]
  \centering
  \includegraphics[width=1\linewidth]{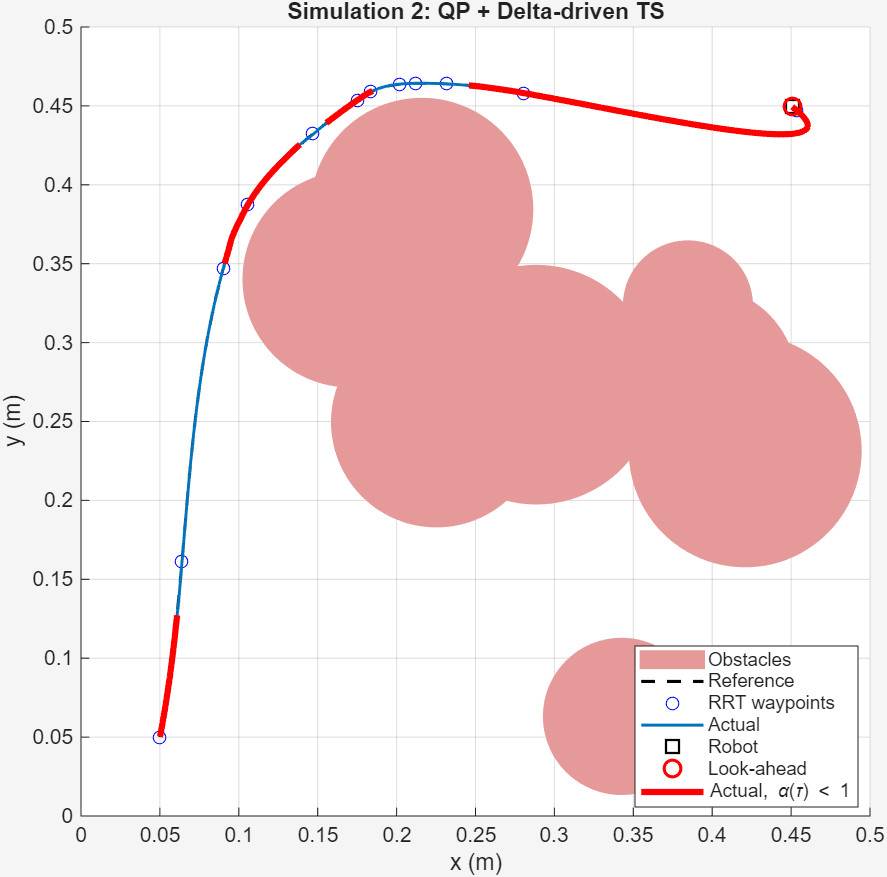}
  \caption{Representative trial in the workspace with active time-scaled trajectory segments highlighted in red.}
  \label{fig:setup_overview}
\end{figure}

To model freeze--resume behavior, each simulation includes a single freeze interval. During the freeze interval, we set $\mathbf{v}(t)=\mathbf{0}$ and $\mathbf{u}(t)=\mathbf{0}$ and hold the position constant in simulation. 
The path parameter $\tau$ continues to evolve during the freeze, but our geometric
look-ahead target 
prevents the target from moving far ahead of the robot. After the resume time, the tracker continues along the same reference path without replanning. The simulation code is available online
\footnote{\url{https://github.com/Hossein-ghd/Reachability_Aware_Time-Scaling}}.
%
%
We compare two cases:
\begin{itemize}
    \item \textbf{Nominal timing:} The closed-form online tracker runs with $\alpha(\tau)= 1$, i.e., without offline time scaling.
    \item \textbf{Offline time scaling:} The same online tracker runs on the time-scaled reference produced by Algorithm \ref{alg:practical_ts}. 
\end{itemize}

Both cases use the same geometric waypoint path, the same look-ahead rule, and the same freeze--resume protocol. Therefore, any difference between the two cases is attributable to the offline timing modification rather than to a change in the online controller structure.

\begin{table}[t]
\caption{Simulation and planning parameters.}
\label{tab:sim_params}
\centering
\begin{tabular}{l c}
\hline
Parameter & Value \\
\hline
$t_s$ (control period) & $0.0125$ s $(80Hz)$ \\
Nominal path horizon & $2$ s \\
$v_{\max}$ & $1.0$ m/s \\
$a_{\max}$ & $2.5$ m/s$^2$ \\
$L_x, L_y$ & $0.5$ m, $0.5$ m \\
$\mathbf{p}_{\text{start}}$ & $[0.05,\;0.05]^T$ m \\
$\mathbf{p}_{\text{goal}}$ & $[0.45,\;0.45]^T$ m \\
$\alpha_{\min}$ & $0.1$ \\
Local slowdown half-window & $0.02$ m \\
$N_{\text{smooth}}$ (smoothing window length) & $201$ points \\
Reference grid size $M$ & $150000$ \\
\hline
\end{tabular}
\end{table}

\subsection{Results} \label{subsec:results}
We ran $50$ randomized trials using the setup from Section \ref{subsec:setup}. For each trial, we compute the offline time-scaling profile from the nominal run and then evaluate the scaled run on the same path and freeze timing. A representative profile is shown in Fig. \ref{fig:alpha_profile}. The profile remains at $\alpha=1$ over substantial portions of the path and drops below $1$ only on selected intervals where the nominal run indicates one-step infeasibility. Thus, the timing modification is local rather than uniform. For the same representative trial, Fig. \ref{fig:setup_overview} shows that these active time-scaled intervals correspond to selected portions of the executed trajectory rather than the full path. Table \ref{tab:alpha_stats} summarizes the profile statistics over all $50$ trials.

\begin{figure}[ht]
    \centering
    \includegraphics[width=1\linewidth]{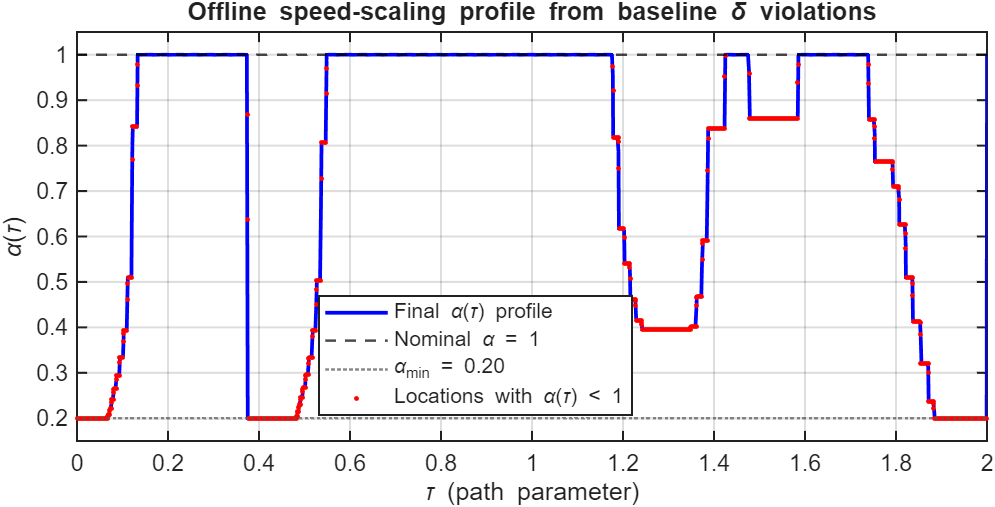}
    \caption{Representative offline time-scaling profile $\alpha(\tau)$.}
    \label{fig:alpha_profile}
\end{figure}

\begin{table}[ht]
\caption{Offline time-scaling profile statistics over 50 trials.}
\label{tab:alpha_stats}
\centering
\begin{tabular}{lc}
\hline
Metric & Value \\
\hline
Minimum $\alpha$ & $0.163 \pm 0.053$ \\
Mean $\alpha$ & $0.846 \pm 0.036$ \\
Modified path points (\%) & $26.98 \pm 9.66$ \\
\hline
\end{tabular}
\end{table}

Although the offline profile is constructed from the one-step margin, the offline construction and the online $\delta_k$ are not the same computation. Offline, the method uses $\delta_k$ and $u_{\mathrm{req},k}$ recorded during a baseline run with $\alpha(\tau)=1$ and assigns slowdown near the corresponding recorded look-ahead locations. Online, $\delta_k$ is recomputed from the simulated state $(\mathbf{p}_k,\mathbf{v}_k)$ during execution of the scaled reference, with $\tau_c$ obtained from \eqref{eq:closest_point}. These quantities can differ due to state mismatch, constraint saturation, closest-point index changes, and post-freeze transients. Because $u_{\mathrm{req},k}$ in \eqref{eq:u_req_scalar} scales with $1/t_s^2$, even small mismatch can produce occasional positive $\delta_k$ spikes during execution. Therefore, a profile that improves the baseline margin signal does not imply $\delta_k \le 0$ at every online sample of the scaled run.

Table \ref{tab:results_summary} reports aggregate performance over the moving interval for the 50 randomized trials. Offline time scaling reduces the occurrence of $\delta_k>0$ and lowers the mean and maximum values of $\delta$. It also lowers the mean and maximum speed, which is expected since the method slows the reference locally. These changes are consistent with Fig. \ref{fig:alpha_profile}, where the scaling acts only over selected path segments rather than over the full trajectory.

\begin{table}[ht]
\caption{Summary over 50 randomized trials over the moving interval. Values are mean $\pm$ standard deviation across trials.}
\label{tab:results_summary}
\centering
\begin{tabular}{lcc}
\hline
Metric & QP & QP + Offline TS \\
\hline
Mean $\delta$ (m/s$^2$) 
& $9.924 \pm 9.916$ 
& $\mathbf{0.382 \pm 3.366}$ \\

5th percentile of $\delta$ (m/s$^2$) 
& $-1.849 \pm 0.363$ 
& $\mathbf{-1.890 \pm 0.272}$ \\

Samples with $\delta_k>0$ (\%) 
& $21.03 \pm 10.18$ 
& $\mathbf{8.96 \pm 4.27}$ \\

Max $\delta$ (m/s$^2$) 
& $130.24 \pm 89.42$ 
& $\mathbf{42.59 \pm 46.47}$ \\

Mean speed (m/s) 
& $0.288 \pm 0.054$ 
& $\mathbf{0.157 \pm 0.052}$ \\

Max speed (m/s) 
& $0.367 \pm 0.072$ 
& $\mathbf{0.344 \pm 0.050}$ \\
\hline
\end{tabular}
\end{table}

Figure \ref{fig:delta_time} plots $\delta_k$ for a representative trial. The shaded region denotes the freezing period, which is excluded from the reported moving-time metrics. After resume, both methods exhibit a transient increase in $\delta_k$, but the offline-scaled reference reduces the time spent in positive-margin regions during normal motion. Small oscillations can still appear in the speed and $\delta_k$ traces due to sampled closest-point/look-ahead updates and the chord--tangent mismatch described in Lemma \ref{lem:chord_tangent}.

\begin{figure}[ht]
    \centering
    \includegraphics[width=1\linewidth]{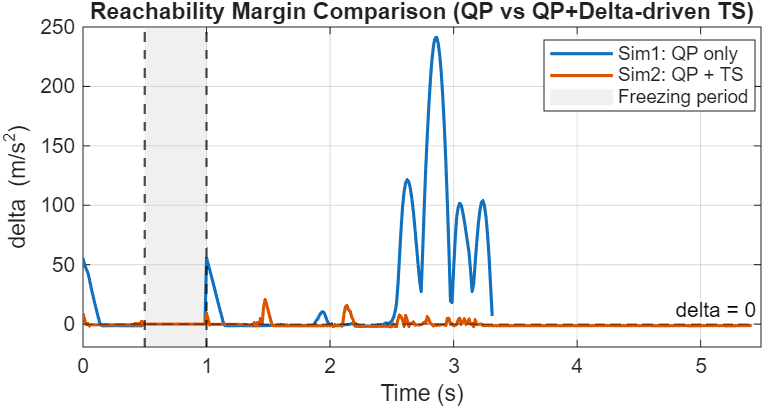}
    \caption{Representative trial: $\delta_k$ versus time with the freezing period indicated (QP vs. QP+TS).}
    \label{fig:delta_time}
\end{figure}

\begin{figure}[ht]
    \centering
    \includegraphics[width=1\linewidth]{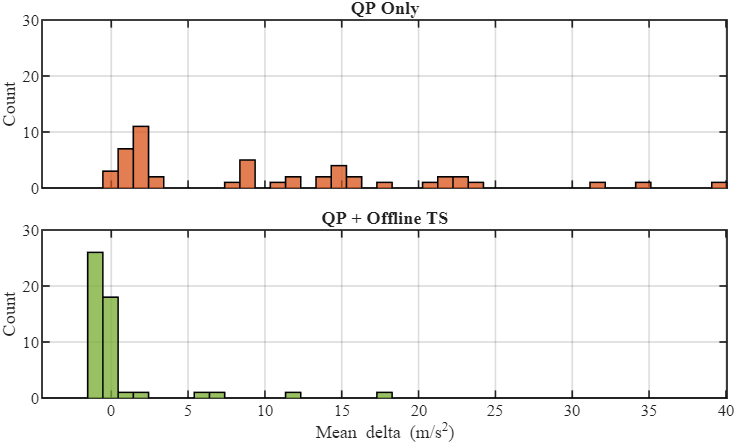}
    \caption{Histogram of per-trial mean $\delta$ over moving samples (50 trials).}
    \label{fig:hist_mean_delta}
\end{figure}

Figure \ref{fig:hist_mean_delta} shows the run-to-run distribution of the per-trial mean margin, where each bin count corresponds to one trial’s average of $\delta_k$ over the moving interval.
Compared to QP-only, QP+Offline TS shifts more trials toward negative mean margins and reduces the number of trials with mean $\delta$ close to zero, i.e., runs that operate nearer the one-step feasibility boundary.
 
Offline reachability-aware time scaling consistently reduces one-step infeasibility events while using the same online controller structure. The improvement is localized to selected path segments, as shown by the offline time-scaling profile in Fig. \ref{fig:alpha_profile} and the highlighted trajectory segments in Fig. \ref{fig:setup_overview}, rather than being applied uniformly along the full path.

\section{Conclusion} \label{sec:conclusion}
This paper studied tracking of planner-generated waypoint paths under speed and acceleration limits. Using the one-step acceleration margin from a reachability-guided tracker, we computed an offline time-scaling profile along a fitted spline to reduce one-step infeasibility while keeping the path geometry fixed. In randomized simulations with RRT* paths and a freeze--resume event, offline scaling reduced the fraction of moving samples with $\delta_k>0$ and lowered the mean and maximum values of $\delta$ while reducing the executed speed through local path-dependent slowdown.


The margin $\delta_k$ is a one-step screening signal under a constant-acceleration discrete model, so it does not provide a multi-step trackability guarantee. In addition, both the scaling profile and the feasibility statistics depend on the look-ahead policy through $s_{\text{LA},k}$ and the arc-length mapping in \eqref{eq:tau_la}. The current margin also uses a single symmetric bound $a_{\max}$ for both acceleration and deceleration. As a result, when the reference speed decreases along the path, $\delta_k$ can exhibit transient spikes because the controller must brake to match the slower reference. In such cases, the one-step test may mark a region as difficult even though the issue comes from the speed transition rather than from the path geometry alone. Finally, the current offline profile is implemented as piecewise-constant on a finite grid; enforcing a smooth timing law would require identifying a continuous function $\alpha(t)$ and accounting for the additional higher-order effects that arise from the dynamics.

Future work includes extending the one-step screening condition to multi-step feasibility, studying alternative look-ahead policies, designing smoother time-scaling profiles with rate constraints on $\alpha(\cdot)$, and validating the method beyond the planar double-integrator setting. It would also be useful to allow different acceleration and deceleration bounds to better match systems with asymmetric actuation limits.

\bibliographystyle{IEEEtran}
\bibliography{refs}

\end{document}